\newtheorem{lemma}{Lemma}
\title{Non Gaussian Denoising Diffusion Models}
\author{
Eliya Nachmani* \\
Tel-Aviv University \\
Facebook AI Research \\
\texttt{enk100@gmail.com} \\
\And 
Robin San Roman* \\
École Normale Supérieure Paris-Saclay \\
\texttt{sanroman.robin@gmail.com} \\  
\And
Lior Wolf\\
Tel-Aviv University\\
\texttt{wolf@cs.tau.ac.il} \\  
}
\begin{document}

\maketitle
{\let\thefootnote\relax\footnote{{*Equal contribution}}}
\begin{abstract}
  Generative diffusion processes are an emerging and effective tool for image and speech generation. In the existing methods, the underline noise distribution of the diffusion process is Gaussian noise. However, fitting distributions with more degrees of freedom, could help the performance of such generative models. In this work, we investigate other types of noise distribution for the diffusion process. Specifically, we show that noise from Gamma distribution provides improved results for image and speech generation. Moreover, we show that using a mixture of Gaussian noise variables in the diffusion process improves the performance over a diffusion process that is based on a single distribution. Our approach preserves the ability to efficiently sample state in the training diffusion process while using Gamma noise and a mixture of noise. 
\end{abstract}

\section{Introduction}
Deep generative neural networks has shown significant progress over the last years. The main architectures for generation are: (i) VAE \cite{kingma2013auto} based, for example, NVAE \cite{vahdat2020nvae} and VQ-VAE \cite{razavi2019generating}, (ii) GAN \cite{goodfellow2014generative} based, for example, StyleGAN \cite{karras2020analyzing} for vision application and WaveGAN \cite{donahue2018adversarial} for speech
(iii) Flow-based, for example Glow \cite{kingma2018glow} (iv) Autoregessive, for example, Wavenet for speech \cite{oord2016wavenet} and (v) Diffusion Probabilistic Models \cite{sohl2015deep}, for example, Denoising Diffusion Probabilistic Models (DDPM) \cite{ho2020denoising} and its implicit version DDIM \cite{song_denoising_2020}.

Models from this last family have shown significant progress in generation capabilities during the last years, e.g., \cite{sohl2015deep, ho2020denoising, chen_wavegrad_2020,kong_diffwave_2020}, and have achieved comparable results to the state of the art generation architecture in both images and speech. % such as StyleGAN2 \cite{karras2020analyzing} and Wavenet \cite{oord2016wavenet}. 

A DDPM is a Markov chain of latent variables. Two processes are modeled: (i) a diffusion process and (ii) a denoising process. During training, the diffusion process learns to transform data samples into Gaussian noise. The denoising is the reverse process and it is used during inference to generate data samples, stating from Gaussian noise. The second process can be condition on attributes to control the generation sample. In order to get high quality synthesis, a large number of denoising steps is used (i.e. $1000$ steps). A notable property of the diffusion process is a closed form formulation of the noise that arises from accumulating diffusion stems. This allows to sampling arbitrary states in the Markov chain of the diffusion process without calculating the previous steps. 

In the Gaussian case, this property stems from the fact that adding Gaussian distributions leads to another Gaussian distribution. Other distributions have similar properties. For example, for the Gamma distribution, the sum of two distributions that share the scale parameter is a Gamma distribution of the same scale. The Poisson distribution has a similar property. However, its discrete nature makes it less suitable for DDPM. 

In DDPM, the mean of the distribution is set at zero. The Gamma distribution, with its two parameters (shape and scale), is better suited to fit the data than a Gaussian distribution with one degree of freedom (scale). Furthermore, the Gamma distribution generalizes other distributions, and many other distributions can be derived from it~\cite{leemis2008univariate}.

The added modeling capacity of the Gamma distribution can help speed up the convergence of the DDPM model. Consider, for example, conventional DDPM model that were trained with Gaussian noise on the CelebA dataset~\cite{liu2015faceattributes}. After $t$ steps, one can compute the histogram of the differences between the obtained image $x_t$ and the original noise image $x_0$. Both a Gaussian distribution, Gamma distribution, and a mixture of Gaussian can then be fitted to this histogram, as shown in Fig.~\ref{fig:fitting_error}(a,b,c). Unsurprisingly, for a single step, both the Gaussian distribution and the Gamma distribution can be fitted with the same error, see Fig.~\ref{fig:fitting_error}(d). However, when trying to fit for more than a single step, the Gamma distribution and the mixture of Gaussian becomes a much better fit.

In this paper, we investigate two types of non-Gaussian noise distribution: (i) Mixture of Gaussian, and (ii) Gamma. The two proposed models maintain the property of the diffusion process to sample arbitrary states without calculating the previous steps. Our results are demonstrated in two major domains: vision and audio. In the first domain, the proposed method is shown to provide a better FID score for generated images. For speech data, we show that the proposed method improves various measures, such as Perceptual Evaluation of Speech Quality (PESQ), short-time objective intelligibility (STOI) and Mel-Cepstral Distortion (MCD).

\begin{figure}[]
    \centering
    \begin{tabular}{cccc}
    \includegraphics[width=.2125\textwidth,keepaspectratio]{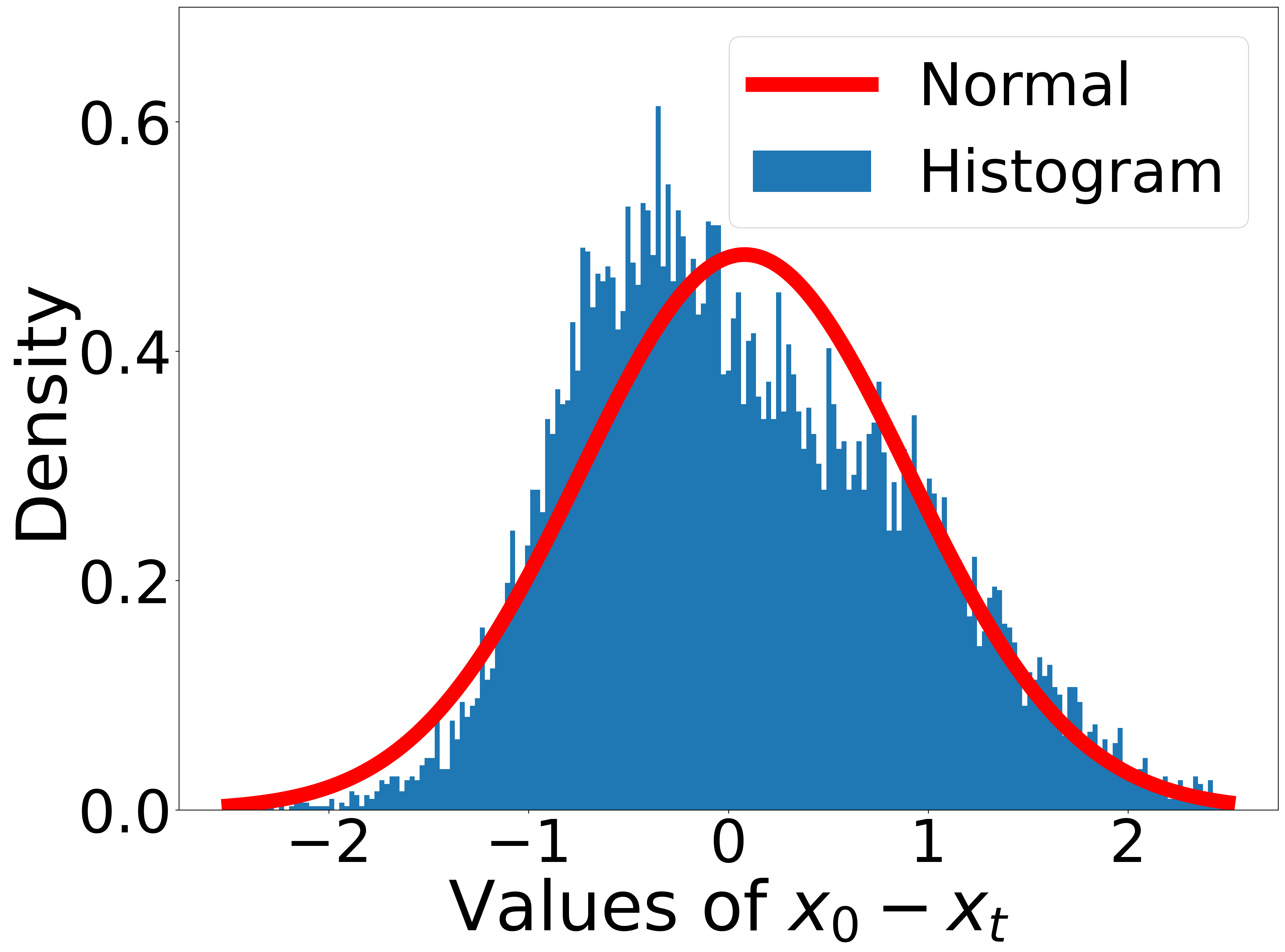} &
    \includegraphics[width=.2125\textwidth,keepaspectratio]{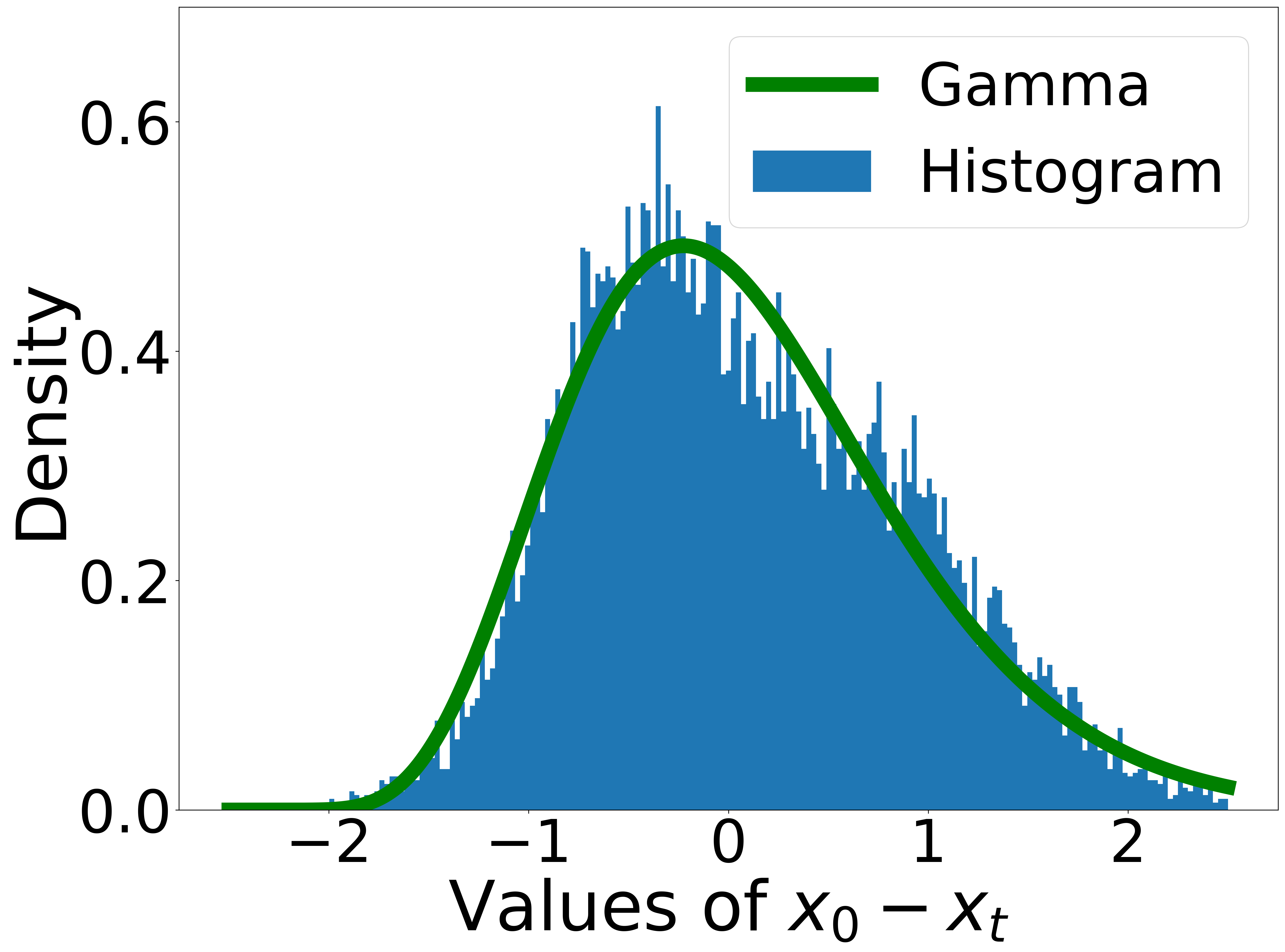} &
    \includegraphics[width=.2125\textwidth,keepaspectratio]{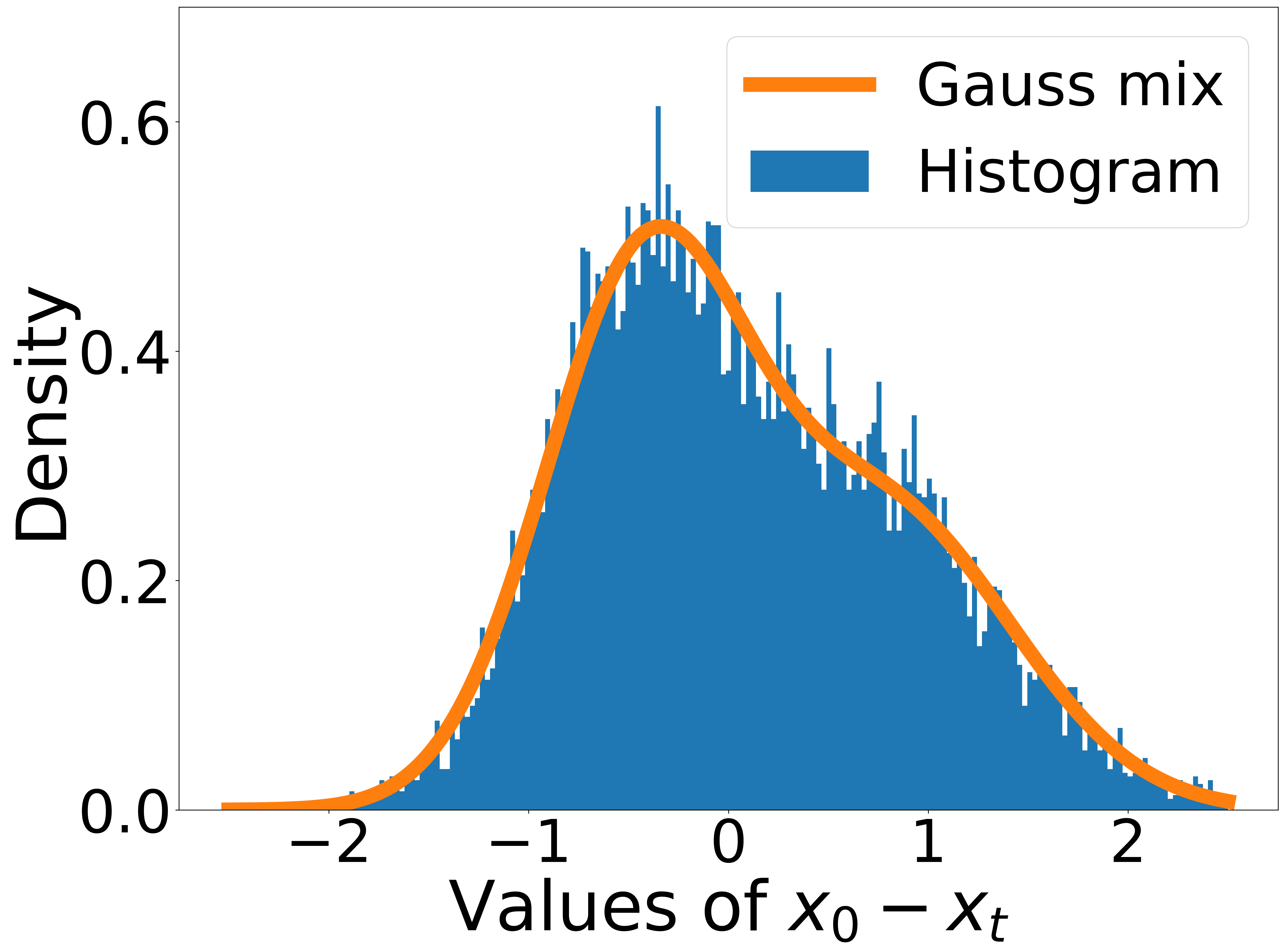} &
    \includegraphics[width=.2125\textwidth,keepaspectratio]{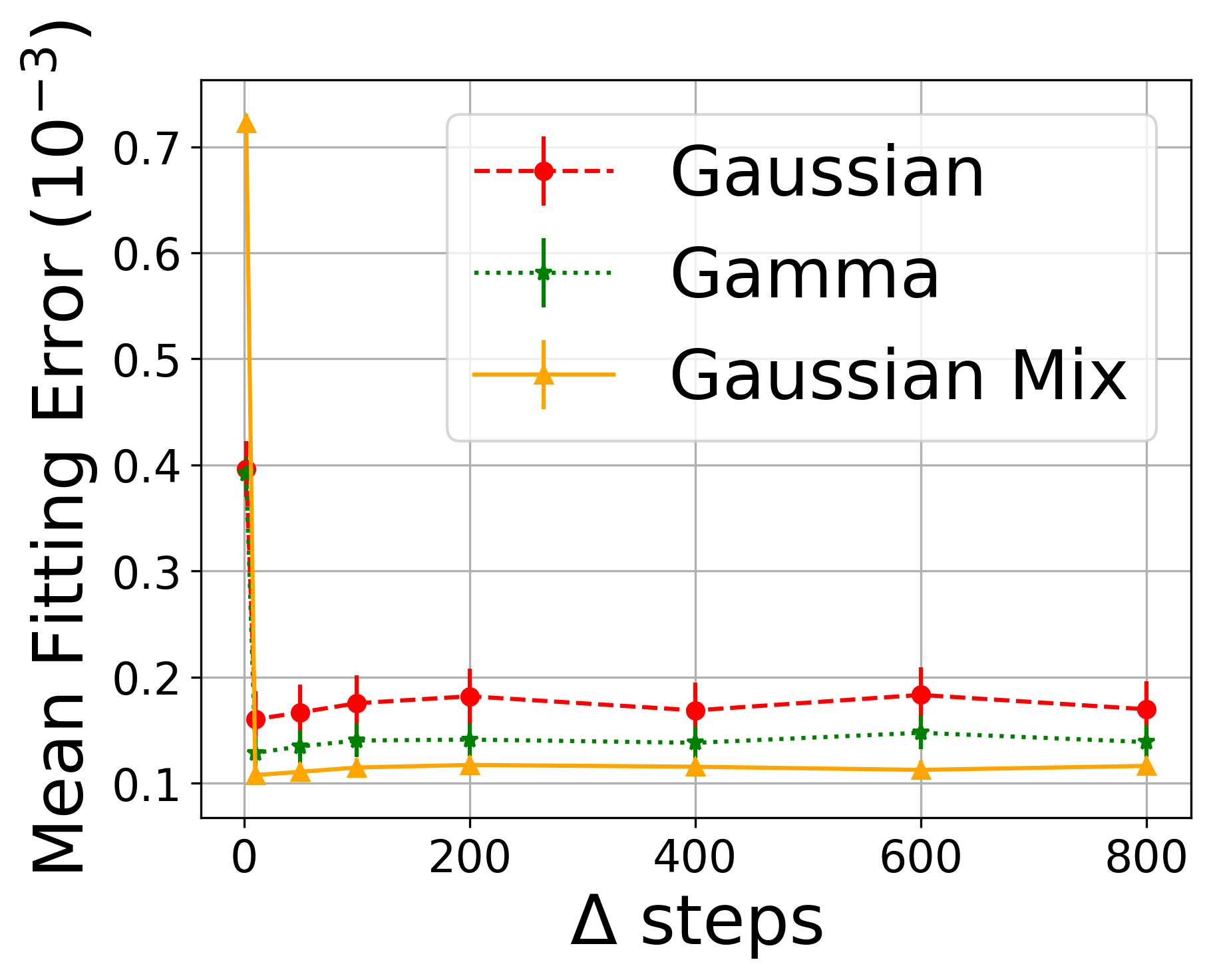} \\
    (a) & (b) & (c) & (d) \\
    \end{tabular}
    \caption{Fitting a distribution to the histogram of the difference between $x_0$ and the image $x_t$ after $t$ DDPM steps, using a pretrained (Gaussian) celebA (64x64) model. (a) The fitting of a Gaussian to the histogram of a typical image after $t-50$ steps. (b) Fitting a mixture of Gaussian distribution. (c) Fitting a Gamma distribution. (d) The fitting error to Gaussian and Gamma distribution, measured as the MSE between the histogram and the fitted probability distribution function. Each point is the average value for the generation of $100$ images. The vertical errorbars denote the standard deviation.}
    \label{fig:fitting_error}
\end{figure}

\section{Related Work} 
In their seminal work, Sohl-Dickstein et al. introduce the Diffusion Probabilistic Model~\cite{sohl2015deep}. The model is applied to various domains, such as time series and images. The main drawback in the proposed model is the fact that it needs up to thousands of iterative steps in order to generate valid data sample. Song et al.~\cite{song2019generative} proposed a diffusion generative model based on Langevin dynamics and the score matching method~\cite{hyvarinen2005estimation}. The model estimate the Stein score function~\cite{liu2016kernelized} which is the logarithm of the data density. Given the Stein score function, the model can generate data points. 

Denoising Diffusion Probabilistic Models (DDPM)~\cite{ho2020denoising} combine generative models based on score matching and neural Diffusion Probabilistic Models into a single model. Similarly, in \cite{chen_wavegrad_2020,kong2020diffwave} a generative neural diffusion process based on score matching was applied to speech generation. These models achieve state of the art results for speech generation, and show superior results over well-established methods, such as Wavernn \cite{kalchbrenner2018efficient}, Wavenet \cite{oord2016wavenet}, and GAN-TTS \cite{binkowski2019high}.

Diffusion Implicit Models (DDIM) is a method to accelerate the denoising process~\cite{song_denoising_2020}. The model employs a non-Markovian diffusion process to generate higher quality sample. The model helps to reduce the number of diffusion steps, e.g., from a thousand steps to a few hundred.

Song et al. show that score based generative models can be considered as a solution to a stochastic differential equation~\cite{song2020score}. Gao et al. provide an alternative approach to train an energy-based generative model using a diffusion process~\cite{gao2020learning}.

Another line of work in audio is that of neural vocoders that are based on a denoising diffusion process. WaveGrad~\cite{chen_wavegrad_2020} and DiffWave~\cite{kong2020diffwave} are conditioned on the mel-spectrogram and produce high fidelity audio samples using as few as six step of the diffusion process. These models outperform adversarial non-autoregressive baselines. 

\begin{figure}[!t]
\begin{minipage}[t]{0.49\textwidth}
\begin{algorithm}[H]
  \caption{DDPM training procedure.} 
  \label{alg:DDPM_train}
  \begin{algorithmic}[1]
    \STATE Input: dataset $d$, diffusion process length $T$, noise schedule $\beta_1,...,\beta_T$
   \REPEAT
   \STATE $ x_0 \sim d(x_0)$
   \STATE $t \sim \mathcal{U}(\{1, ..., T\})$
   \STATE $\varepsilon \sim \mathcal{N}(0, I)$
   \STATE $x_t = \sqrt{\bar \alpha_t}x_0 + \sqrt{1 - | \bar \alpha_t|}\varepsilon$
   \STATE Take gradient descent step on: \newline $\| \varepsilon - \varepsilon_{\theta}(x_t, t )\|_1 $
   \UNTIL{converged}
  \end{algorithmic}
\end{algorithm}
\end{minipage}
\hfill
\begin{minipage}[t]{0.49\textwidth}
\begin{algorithm}[H]
  \caption{DDPM sampling algorithm}%
  \label{alg:DDPM_inf}
  \begin{algorithmic}[1]
  \STATE $x_T \sim \mathcal{N}(0, I) $
   \FOR{t= T, ..., 1}
   \STATE $ z \sim \mathcal{N}(0, I)$
   \STATE $\hat \varepsilon = \varepsilon_\theta(x_t, t)$
   \STATE $x_{t-1} = \frac{x_t - \frac{1 - \alpha_t}{\sqrt{1 - \bar \alpha_t}}\hat \varepsilon }{\sqrt{\alpha_t}}$ 
   \IF{$t \neq 1$}
   \STATE $x_{t-1} = x_{t-1} + \sigma_t z$ 
   \ENDIF
   \ENDFOR
   \STATE \textbf{return} $x_0$
  \end{algorithmic}
\end{algorithm}
\end{minipage}
\end{figure}

\section{Diffusion models for Various Distributions}\label{section:method} 

We start by recapitulating the Gaussian case, after which we derive diffusion models for mixtures of Gaussians and for the Gamma distribution.

\subsection{Background - Gaussian DDPM} 
Diffusion network learn the gradients of the data log density
\begin{equation}
    s(y) = \nabla_y \log p(y) 
\end{equation}
By using Langevin Dynamics and the gradients of the data log density $\nabla_y \log p(y)$, a sample procedure from the probability can be done by:
\begin{equation}\label{eq:langevin}
    \tilde y_{i+1} = \tilde y_i + \frac{\eta}{2} s(\tilde y_i) + \sqrt{\eta}z_i
\end{equation}
where $z_i\sim \mathcal{N}(0, I)$ and $\eta > 0$ is the step size. 
The diffusion process in DDPM \cite{ho2020denoising} is defined by a Markov chain that gradually adds Gaussian noise to the data according to a noise schedule. The diffusion process is defined by:
\begin{equation}
    q(x_{1:T}|x_0) = \prod_{t=1}^{T} q(x_t|x_{t-1})\,,
\end{equation}
where T is the length of the diffusion process, and $x_T,...,x_t,x_{t-1},...,x_0$ is a sequence of latent variables with the same size as the clean sample $x_0$.
The Diffusion process is parameterized with a set of parameters called noise schedule ($\beta_1, \dots \beta_T$) that defines the variance of the noise added at each step:
\begin{equation}
\label{eq:ddpm_diff_normal}
     q(x_t|x_{t-1}) := \mathcal{N}(x_{t}; \sqrt{1 - \beta_t}x_{t-1}, \beta_t\mathbf{I})\,,
\end{equation}

Since we are using Gaussian noise random variable at each step The diffusion process can be simulated for any number of steps with the closed formula: 
\begin{equation}\label{eq:closedy_n}
    x_t = \sqrt{\bar \alpha_t} x_0 + \sqrt{1 - \bar\alpha_t} \varepsilon\,,
\end{equation}

where $\alpha_i = 1 - \beta_i$, $\bar \alpha_t = \prod_{i=1}^t \alpha_i$ and $\varepsilon = \mathcal{N}(0,\mathbf{I})$. The complete training procedure is defined in Alg.\ref{alg:DDPM_train}. Given the input dataset $d$, the algorithm samples $\epsilon$, $x_0$ and $t$. The noisy latent state $x_t$ is calculated and fed to the DDPM neural network $\varepsilon_\theta$. A gradient descent step is taken in order to estimate the $\varepsilon$ noise with the DDPM network $\varepsilon_\theta$. 

The inference procedure uses the trained model $\varepsilon_{\theta}$ and a variation of the Langevin dynamics. The following update from \cite{song_denoising_2020} is used to reverse a step of the diffusion process:
\begin{equation}\label{eq:DDPM_update}
x_{t-1} = \dfrac{x_t - \frac{1 - \alpha_t}{\sqrt{1- \bar\alpha_t}}\varepsilon_\theta(x_t, t)}{\sqrt{\bar\alpha_t}} + \sigma_t \varepsilon\,,
\end{equation}
where $\varepsilon$ is white noise and $\sigma_t$ is the standard deviation of added noise. In \cite{song_denoising_2020} the authors use ${\sigma_t}^2 = \beta_t$. The complete inference algorithm present at Alg.~\ref{alg:DDPM_inf}. Starting from a Gaussian noise and then step-by-step reversing the diffusion process, by iteratively employing the update rule of Eq.~\ref{eq:DDPM_update}.

%%%%
%%%%

\subsection{Mixture of Gaussian Noise} 
Eq.~\ref{eq:ddpm_diff_normal} can be written as:
\begin{equation}\label{eq:difusion_step}
	 x_t = \sqrt{1 - \beta_t} x_{t-1} + \sqrt{\beta_t}\epsilon_t
\end{equation}
where $\epsilon_t$ is the Gaussian noise of step $t$. This can be generalized by adding a mixture of Gaussians at each step, i.e., 
\begin{equation}\label{eq:mix_gauss}
	 x_t = \sqrt{1 - \beta_t} x_{t-1} + \sqrt{\beta_t}(\sum_{i=0}^{C}{z_i\epsilon^i_t})
\end{equation}
where $z_i$ are boolean random variables and $C$ is the number of Gaussian variables. We denote by $p_i$ the probability that $z_i=1$ for the $i$ Gaussian variable ($\sum_{i=0}^{C} p_i =1$). 

For simplicity, we focus on $C=2$, mixtures with an expectation of zero, and two Gaussian distributions with the same variance $\phi_t^2$ and the same weight ($p=0.5$):
\begin{equation}\label{eq:idea1}
	 x_t = \sqrt{1 - \beta_t} x_{t-1} + \sqrt{\beta_t}(b \epsilon^1_t + (1-b)\epsilon^2_t)\,,
\end{equation}
where $\epsilon^1_t \sim N(m^1_t, {\phi_t}^2)$, $\epsilon^1_t \sim N(m^2_t, {\phi_t}^2)$ and $b \sim \mathcal Bernoulli(p)$. In addition to the zero mean property that we assume, we scale the variables such that the variance of $b \epsilon^1_t + (1-b)\epsilon^2_t$ is one.

We reparametrize the added noise at each step as $\sqrt{\beta_t}X_t$, i.e.
\begin{equation}
    X_t = b \epsilon^1_t + (1-b)\epsilon^2_t\,.
\end{equation}

Since $X_t$ is zero mean and has a variance of one ($E(X_t)=0$ and $V(X_t)=1$), the added value at each step $\sqrt{\beta_t}X_t$ will have zero mean and a $\beta_t$ variance. The following Lemma captures the relation between the two means in this case.

\begin{lemma}
Assuming that $\epsilon_t^1 \sim \mathcal{N}(m_t^1, {\phi_t}^2)$, $\epsilon_t^2 \sim \mathcal{N}(m_t^2, {\phi_t}^2)$, $b\sim Bernouilli(p)$, $E(X_t) = 0$ and $V(X_t) = 1$. Then the following equation hold:
\begin{equation}\label{eq:mean_relation}
	m_t^1 = \sqrt{\frac{1-{\phi_t}^2}{p(1-p) + \frac{p^3}{1-p} + 2p^2}}
\end{equation}
\begin{equation}\label{eq:mean_relation2}
	m_t^2 = -\frac{p}{1-p}m_t^1
\end{equation}
\end{lemma}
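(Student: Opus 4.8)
The plan is to treat the Bernoulli selector $b$ as an indicator random variable that is independent of the Gaussians $\epsilon_t^1,\epsilon_t^2$, and to read off the two claimed identities as the consequences of the two imposed moment conditions $E(X_t)=0$ and $V(X_t)=1$. First I would compute the mean: by independence, $E(X_t)=E(b)\,E(\epsilon_t^1)+E(1-b)\,E(\epsilon_t^2)=p\,m_t^1+(1-p)\,m_t^2$, so setting $E(X_t)=0$ immediately yields $m_t^2=-\tfrac{p}{1-p}\,m_t^1$, which is Eq.~\eqref{eq:mean_relation2}.

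Next I would compute the variance, exploiting that $b\in\{0,1\}$ and hence $b^2=b$, $(1-b)^2=1-b$, and $b(1-b)=0$. Therefore $X_t^2=b(\epsilon_t^1)^2+(1-b)(\epsilon_t^2)^2$, the cross term $b(1-b)\epsilon_t^1\epsilon_t^2$ dropping out entirely. Using $E[(\epsilon_t^i)^2]=\phi_t^2+(m_t^i)^2$ and the fact that $V(X_t)=E(X_t^2)$ since $E(X_t)=0$, this gives
\[
V(X_t)=\phi_t^2+p\,(m_t^1)^2+(1-p)\,(m_t^2)^2 ,
\]
and imposing $V(X_t)=1$ produces $p\,(m_t^1)^2+(1-p)\,(m_t^2)^2=1-\phi_t^2$.

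Finally I would substitute the mean relation $m_t^2=-\tfrac{p}{1-p}m_t^1$ into this last equation and solve for $m_t^1$. The left-hand side collapses to a constant times $(m_t^1)^2$; the particular un-simplified form of the denominator in Eq.~\eqref{eq:mean_relation} is most naturally obtained by first rewriting $p\,(m_t^1)^2+(1-p)\,(m_t^2)^2=p(1-p)(m_t^1-m_t^2)^2$ (valid given the mean constraint) and then expanding $(m_t^1-m_t^2)^2=(m_t^1)^2\big(1+\tfrac{2p}{1-p}+\tfrac{p^2}{(1-p)^2}\big)$, so that the multiplying factor becomes $p(1-p)+2p^2+\tfrac{p^3}{1-p}$. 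Dividing through and taking the positive square root yields exactly Eq.~\eqref{eq:mean_relation}.

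There is no substantive obstacle: the argument is a short second-moment computation. The only points that require care are (i) using the idempotency $b^2=b$, which is precisely what annihilates the $\epsilon_t^1\epsilon_t^2$ cross term and keeps the algebra linear in the $(m_t^i)^2$; (ii) the tacit assumption $\phi_t^2\le 1$, without which $m_t^1$ would fail to be real; and (iii) the sign convention of selecting the positive root (the negative root, with the matching $m_t^2$, gives an equally valid mirrored mixture).
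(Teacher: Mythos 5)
Your proof is correct and follows essentially the same moment-matching argument as the paper: the mean condition yields $m_t^2=-\tfrac{p}{1-p}m_t^1$, and the variance condition, after substitution, yields the formula for $m_t^1$. The only difference is cosmetic — you compute $V(X_t)=E(X_t^2)$ directly via $b^2=b$ and $b(1-b)=0$, whereas the paper splits $X_t$ into $b\epsilon_t^1$ and $(1-b)\epsilon_t^2$ and expands the variance through the two component variances and their covariance; both routes reduce to the same constraint $p(m_t^1)^2+(1-p)(m_t^2)^2=1-\phi_t^2$, and your remark that the paper's denominator simplifies to $\tfrac{p}{1-p}$ is accurate.
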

\begin{proof}
     The mean value of $X_t$ can be calculate as:
    \begin{align*}
       E(X_t) &= E(b \epsilon^1_t) + E((1-b)\epsilon^2_t)  = E(b)E(\epsilon^1_t) + E(1-b)E(\epsilon^2_t) \\ &= E(b)E(\epsilon^1_t) + E(1-b)E(\epsilon^2_t)  = pm_t^1 + (1-p)m_t^2
    \end{align*}
    
    since $b, \epsilon^1_t, \epsilon^2_t$ are independent and due to the linearity of the expectation.
    Since we assume that $E(X_t)=0$, we have:
    \begin{equation}\label{eq:mean_relation_gmm_p}
    	m_t^2 = -\frac{p}{1-p}m_t^1
    \end{equation}
    
    Denote $X^1_t= b\epsilon^1_t$ and $X^2_t =(1-b)\epsilon^2_t$. The variance of $X_t$ is given as:
    \begin{equation}
    \label{eq:var_base}
        V(X_t) = V(X^1_t) + V(X^2_t) + 2cov(X^1_t, X^2_t)
    \end{equation}
    where $cov(X^1_t, X^2_t) = - E(X^1_t)E(X^2_t)$
    since $X^1_tX^2_t = 0$. Therefore, Eq.\ref{eq:var_base} becomes:
    \begin{equation}
    \label{eq:var_all}
        V(X_t) = V(X^1_t) + V(X^2_t) - 2E(X^1_t)E(X^2_t)
    \end{equation}
    
    $V(X^1_t)$ can be calculate as:
    \begin{align*}
        V(X^1_t) &= V(b\epsilon^1_t) = V(\epsilon^1_t)E(b^2) + V(b)E(\epsilon^1_t)^2 = 
     {\phi_t}^2E(b) + p(1-p)({m_t^1})^2 \nonumber \\ &= {\phi_t}^2p + p(1-p)({m_t^1})^2
    \end{align*}
    since $b^2 = b$. Similarly, we can show that $V(X^2_t) = {\phi_t}^2(1-p) + p(1-p)({m_t^2})^2$. 
    Therefore, we have:
    \begin{align}
        V(X_t) &= {\phi_t}^2p + p(1-p)({m_t^1})^2 + {\phi_t}^2(1-p) + p(1-p)({m_t^2})^2 - 2E(X^1_t)E(X^2_t) \nonumber
        \\ &= {\phi_t}^2p + p(1-p)({m_t^1})^2 + {\phi_t}^2(1-p) + p(1-p)({m_t^2})^2 - 2pm_t^1(1-p)m_t^2 \label{mid_var_gmm_p}
    \end{align}
    
    Substitute Eq.\ref{eq:mean_relation_gmm_p} into Eq.\ref{mid_var_gmm_p} we have:
    $V(X_t) = ({m_t^1})^2 \left( p(1-p) + \frac{p^3}{1-p} + 2p^2 \right) +p {\phi_t}^2 + (1-p){\phi_t}^2$. Since $V(X_t) = 1$ we have:
    \begin{equation}\label{eq:mean_var_relation}
    	m_t^1 = \sqrt{\frac{1-{\phi_t}^2}{p(1-p) + \frac{p^3}{1-p} + 2p^2}}
    \end{equation}

\end{proof}

Denote $\mathcal{M}({\phi_t}^2)$ ($\phi_t \in [0,1]$) as the mixture model with two equally weighed Gaussian components, each with a variance of $\phi_t^2$ and means stated above. The closed form for sampling $x_t$ from $x_0$ is given by:
\begin{equation}
\label{eq:closed_form_gauss_mix}
    x_t = \sqrt{\bar \alpha_t} x_0 + \sqrt{1 - \bar\alpha_t}N_t
\end{equation}
Where $N_t\sim \mathcal{M}({\phi_t}^2)$ and $\phi_t$ is a free hyperparameter.
Similarly to Eq.\ref{eq:DDPM_update}, the inference is given by:
\begin{equation}\label{eq:sample_gauss_mix}
x_{t-1} = \dfrac{x_t - \frac{1 - \alpha_t}{\sqrt{1- \bar\alpha_t}}\varepsilon_\theta(x_t, t)}{\sqrt{\bar\alpha_t}} + \sigma_t N_t
\end{equation}

The complete training procedure is given in Alg.~\ref{alg:mix_gauss_train}. The input is the standard deviation for each step ($\phi_t)_{t \in \{1... T\} }$, the dataset $d$, the total number of steps in the diffusion process $T$ and the noise schedule $\beta_1,...,\beta_T$. For each batch, the training algorithm samples an example $x_0$, a number of steps $t$ and the noise itself $\varepsilon$. Then it calculates $x_t$ from $x_0$ by using Eq.\ref{eq:closed_form_gauss_mix}. The neural network $\varepsilon_{\theta}$ has an input $x_t$ and is condition on the number of step $t$.
A gradient descend step is used to approximate $N_t$ with the neural network $\varepsilon_{\theta}$. The main differences from the single Gaussian case (Alg.~\ref{alg:DDPM_train}) are the following: (i) calculating the mixture of Gaussian parameters (ii) and sampling $N_t$.

The inference procedure is given in Alg.~\ref{alg:mix_gauss_infernce}. The algorithm starts from a noise $x_T$ sampled from $\mathcal{M}({\phi_T}^2)$. Then for $T$ steps the algorithm estimates $x_{t-1}$ from $x_t$ by using Eq.\ref{eq:sample_gauss_mix}. Note that as in \cite{song_denoising_2020} $\sigma_t=\beta_t$. The main differences from the single Gaussian case (Alg.~\ref{alg:DDPM_inf}) is the following: (i) the start sampling point $x_T$, (ii) and the sampling noise $z$.

\begin{figure}[!t]
\begin{minipage}[t]{0.49\textwidth}
\begin{algorithm}[H]
  \caption{Mixture of Gaussian Training Algorithm} 
  \label{alg:mix_gauss_train}
  \begin{algorithmic}[1]
   \STATE Input: The standard deviations ($\phi_t)_{t \in \{1... T\} }$, dataset $d$, diffusion process length $T$, noise schedule $\beta_1,...,\beta_T$
   \REPEAT
   \STATE $ x_0 \sim d(x_0)$
   \STATE $t \sim \mathcal{U}(\{1, ..., T\})$
   \STATE $N_t \sim \mathcal{M}({\phi_t}^2)$
   \STATE $x_t = \sqrt{\bar \alpha_t}x_0 + \sqrt{1 - | \bar \alpha_t|}N_t$
   \STATE Take gradient descent step on: \newline $\| N_t - \varepsilon_{\theta}(x_t, t)\|_1 $
   \UNTIL{converged}
  \end{algorithmic}
\end{algorithm}
\end{minipage}
\hfill
\begin{minipage}[t]{0.49\textwidth}
\begin{algorithm}[H]
  \caption{Mixture of Gaussian Inference Algorithm}%
  \label{alg:mix_gauss_infernce}
  \begin{algorithmic}[1]
  \STATE $x_T \sim \mathcal{M}({\phi_T}^2) $
   \FOR{t= T, ..., 1}
   \STATE $ z \sim \mathcal{M}({\phi_{t-1}}^2)$
   \STATE $\hat \varepsilon = \varepsilon_\theta(x_t, t)$
   \STATE $x_{t-1} = \frac{x_t - \frac{1 - \alpha_t}{\sqrt{1 - \bar \alpha_t}}\hat \varepsilon }{\sqrt{\alpha_t}}$ 
   \IF{$t \neq 1$}
   \STATE $x_{t-1} = x_{t-1} + \sigma_t z$ 
   \ENDIF
   \ENDFOR
   \STATE \textbf{return} $x_0$
  \end{algorithmic}
\end{algorithm}
\end{minipage}
\end{figure}

\subsection{Using the Gamma distribution for noise} 

Denote $\Gamma(k, \theta)$ as the Gamma distribution, where $k$ and $\theta$ are the shape and the scale respectively. We modify Eq.~\ref{eq:difusion_step} by adding, during the diffusion process, noise that follows a Gamma distribution:
\begin{equation}\label{eq:gamma_t}
	 x_t = \sqrt{1 - \beta_t} x_{t-1} + (g_t - \mathbb{E}(g_t))
\end{equation}
where $g_t\sim \Gamma(k_t, \theta_t)$, $\theta_t = \sqrt{\bar \alpha_t}\theta_0$ and $k_t=\dfrac{\beta_t}{\alpha_t{\theta_0}^2}$. Note that $\theta_0$ and $\beta_t$ are hyperparameters.

Since the sum of Gamma distribution (with the same scale parameter) is distributed as Gamma distribution, one can derive a closed form for $x_t$, i.e. an equation to calculate $x_t$ from $x_0$:
\begin{equation}
\label{eq:close_form_single_gamma}
 x_t = \sqrt{\bar \alpha_t} x_0 + (\bar g_t - \bar k_t\theta_t)
\end{equation}
where $\bar g_t \sim \Gamma(\bar k_t, \theta_t)$ and $\bar k_t = \sum_{i=1}^t k_i$.

\begin{lemma}
    Let $\theta_0 \in \mathbb{R}$, 
    Assuming $\forall t \in \{1,..., T\}$, $k_t=\dfrac{\beta_t}{\alpha_t{\theta_0}^2}$, 
    $\theta_t = \sqrt{\bar \alpha_t}\theta_0$, and $g_t\sim \Gamma(k_t, \theta_t)$. Then $\forall t \in \{1,..., T\}$ the following hold:
    
    \begin{equation}\label{eq:lemma2_1}
        E(g_t - E(g_t)) =0, V(g_t - E(g_t)) =\beta_t
    \end{equation}
    
    \begin{equation}\label{eq:lemma2_2}
        x_t = \sqrt{\bar \alpha_t}x_0 + (\bar g_t - E(\bar g_t))
    \end{equation}
    where $\bar g_t \sim \Gamma(\bar k_t, \theta_t)$ and $\bar k_t = \sum_{i=1}^t k_i$
\end{lemma}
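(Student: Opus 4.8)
The plan is to treat the two displayed claims separately: Eq.~\ref{eq:lemma2_1} is a short moment computation, while Eq.~\ref{eq:lemma2_2} follows by unrolling the per-step recursion Eq.~\ref{eq:gamma_t} and invoking closure of the Gamma family under positive scaling and under addition at a common scale.

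For Eq.~\ref{eq:lemma2_1}: subtracting a constant affects neither the zero-mean property nor the variance, so $E(g_t - E(g_t)) = 0$ is immediate from linearity of expectation, and $V(g_t - E(g_t)) = V(g_t)$. I would then use the standard identity that a $\Gamma(k,\theta)$ variable (shape $k$, scale $\theta$) has variance $k\theta^2$, so that $V(g_t) = k_t\theta_t^2$, and substitute $k_t = \beta_t/(\alpha_t\theta_0^2)$ together with $\theta_t^2 = \bar\alpha_t\theta_0^2$; this leaves a short algebraic simplification that produces the stated value (the hyperparameter $\theta_0$ cancelling).

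For Eq.~\ref{eq:lemma2_2}: I would induct on $t$. The base case $t=1$ is Eq.~\ref{eq:gamma_t} with $x_0$ in place of $x_{t-1}$, together with $\bar g_1 = g_1 \sim \Gamma(k_1,\theta_1)$, $\bar k_1 = k_1$ and $\bar\alpha_1 = \alpha_1$. For the inductive step, assume $x_{t-1} = \sqrt{\bar\alpha_{t-1}}x_0 + (\bar g_{t-1} - E(\bar g_{t-1}))$ with $\bar g_{t-1}\sim\Gamma(\bar k_{t-1},\theta_{t-1})$ and $\bar g_{t-1}$ independent of $g_t$. Plugging into Eq.~\ref{eq:gamma_t} and using $\sqrt{1-\beta_t}=\sqrt{\alpha_t}$ yields
\begin{equation*}
x_t = \sqrt{\bar\alpha_t}x_0 + \sqrt{\alpha_t}\bigl(\bar g_{t-1} - E(\bar g_{t-1})\bigr) + \bigl(g_t - E(g_t)\bigr).
\end{equation*}
Now I would invoke two properties of the Gamma law: (i) if $Y\sim\Gamma(k,\theta)$ and $c>0$ then $cY\sim\Gamma(k,c\theta)$, hence $\sqrt{\alpha_t}\,\bar g_{t-1}\sim\Gamma(\bar k_{t-1},\sqrt{\alpha_t}\theta_{t-1})$, and since $\theta_{t-1}=\sqrt{\bar\alpha_{t-1}}\theta_0$ this scale equals $\sqrt{\bar\alpha_t}\theta_0=\theta_t$; and (ii) independent Gamma variables at a common scale add, with shapes adding, so $\bar g_t := \sqrt{\alpha_t}\,\bar g_{t-1}+g_t \sim \Gamma(\bar k_{t-1}+k_t,\theta_t)=\Gamma(\bar k_t,\theta_t)$. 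By linearity, $E(\bar g_t) = \sqrt{\alpha_t}E(\bar g_{t-1})+E(g_t)$, so the three noise terms collapse to $\bar g_t - E(\bar g_t)$, closing the induction. Equivalently, one can unroll all $t$ steps at once as $x_t = \sqrt{\bar\alpha_t}x_0 + \sum_{i=1}^{t}\sqrt{\bar\alpha_t/\bar\alpha_i}\,(g_i - E(g_i))$ and check that each rescaled summand $\sqrt{\bar\alpha_t/\bar\alpha_i}\,g_i$ has scale exactly $\theta_t$, so the sum is $\Gamma(\sum_i k_i,\theta_t)$.

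The only real subtlety is bookkeeping on the scale parameter: everything hinges on multiplying $\bar g_{t-1}$ (or $g_i$) by the relevant power of the $\alpha$'s sending its scale to exactly $\theta_t$, which is precisely what the choices $\theta_t = \sqrt{\bar\alpha_t}\theta_0$ and $k_t = \beta_t/(\alpha_t\theta_0^2)$ are designed to guarantee and what licenses the use of Gamma additivity. One should also state explicitly that the per-step noises $g_1,\dots,g_t$ are mutually independent (as in the Gaussian construction), since additivity of Gamma laws needs independent summands.
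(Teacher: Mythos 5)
Your proposal follows essentially the same route as the paper's proof: the moment identity $V(g_t-E(g_t))=V(g_t)=k_t\theta_t^2$ for Eq.~\ref{eq:lemma2_1}, and induction on $t$ for Eq.~\ref{eq:lemma2_2} using the scaling property $cY\sim\Gamma(k,c\theta)$ to bring $\bar g_{t-1}$ to the common scale $\theta_t$ and then additivity of independent Gamma variables, with your explicit independence caveat and the one-shot unrolling being sensible (if minor) additions. The only point worth flagging is that the ``short algebraic simplification'' you defer does not actually close under the lemma's literal hypotheses: $k_t\theta_t^2=\frac{\beta_t}{\alpha_t\theta_0^2}\cdot\bar\alpha_t\theta_0^2=\beta_t\bar\alpha_{t-1}$, not $\beta_t$ (the paper makes the same silent jump; the variant $k_t=\beta_t/(\bar\alpha_t\theta_0^2)$ used in the experiments section is the one that yields exactly $\beta_t$).
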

\begin{proof}
    
    The first part of Eq.~\ref{eq:lemma2_1} is immediate. The variance part is also straightforward:
    \begin{equation*}
        V(g_t - E(g_t))= k_t{\theta_t}^2 = \beta_t
    \end{equation*}

    Eq.~\ref{eq:lemma2_2} is proved by induction on $t\in \{1, ...T\}$. 
    For $t=1$:
    \begin{equation*}
        x_1 = \sqrt{1-\beta_1}x_0 + g_1 - E(g_1)
    \end{equation*}

    since $\bar k_1 = k_1$, $\bar g_1 = g_1$. We also have that $\sqrt{1-\beta_1} = \sqrt{\bar \alpha_1}$. Thus we have:

\begin{equation*}
    x_1 = \sqrt{\bar \alpha_1}x_0 + (\bar g_1 - E(\bar g_1))
\end{equation*}

Assume Eq.~\ref{eq:lemma2_2} holds for some $t\in \{1, ...T\}$. The next iteration is obtained as
\begin{align}
    x_{t+1} &= \sqrt{1- \beta_{t+1}} x_t + g_{t+1} - E(g_{t+1})\\
           & = \sqrt{1- \beta_{t+1}} (\sqrt{\bar \alpha_t}x_0 + (\bar g_t - E(\bar g_t))) + g_{t+1} - E(g_{t+1}) \\
           & = \sqrt{\bar \alpha_{t+1}}x_0 + \sqrt{1- \beta_{t+1}} \bar g_t + g_{t+1} - (\sqrt{1- \beta_{t+1}}E(\bar g_t) + E(g_{t+1}))
\end{align}

It remains to be proven that (i) $\sqrt{1- \beta_{t+1}} \bar g_t + g_{t+1} = \bar g_{t+1}$ and (ii) $\sqrt{1- \beta_{t+1}}E(\bar g_t) + E(g_{t+1}) = E(\bar g_{t+1})$. Since $\bar g_t \sim \Gamma(\bar k_t, \theta_t)$ hold, then:
\begin{align*}
    \sqrt{1- \beta_{t+1}} \bar g_t & \sim \Gamma(\bar k_t, \sqrt{1- \beta_{t+1}}\theta_t) = \Gamma(\bar k_t, \theta_{t+1}) 
\end{align*}
Therefore, we prove (i):
\begin{align*}
    \sqrt{1- \beta_{t+1}} \bar g_t + g_{t+1} &\sim \Gamma(\bar k_t+ k_{t+1}, \theta_{t+1}) = \Gamma(\bar k_{t+1}, \theta_{t+1})
\end{align*}
which implies that $\sqrt{1- \beta_{t+1}} \bar g_t + g_{t+1}$ and $\bar g_{t+1}$ have the same probability distribution.

Furthermore, by the linearity of the expectation, one can obtain (ii):
\begin{align*}
     \sqrt{1- \beta_{t+1}}E(\bar g_t) + E(g_{t+1}) &=  E(\sqrt{1- \beta_{t+1}} \bar g_t + g_{t+1}) \\
    &= E(\bar g_{t+1})
\end{align*}

Thus, we have:
\begin{equation*}
    x_{t+1} = \sqrt{\bar \alpha_{t+1}}x_0 + (\bar g_{t+1} - E(\bar g_{t+1}))
\end{equation*}
which ends the proof by induction.
\end{proof}
Similarly to Eq.\ref{eq:DDPM_update} by using Langevin dynamics, the inference is given by:
\begin{equation}\label{eq:sample_gamma}
x_{t-1} = \dfrac{x_t - \frac{1 - \alpha_t}{\sqrt{1- \bar\alpha_t}}\varepsilon_\theta(x_t, t)}{\sqrt{\bar\alpha_t}} + \sigma_t \dfrac{\bar g_t - E(\bar g_t)}{\sqrt{V(\bar g_t)}}
\end{equation}

In Algorithm~\ref{alg:single_gamma_train} we describe the training procedure. As input we have the: (i) initial scale $\theta_0$, (ii) the dataset $d$, (iii) the number of maximum step in the diffusion process $T$ and (iv) the noise schedule $\beta_1,...,\beta_T$. The training algorithm sample: (i) an example $x_0$, (ii) number of step $t$ and (iii) noise $\varepsilon$. Then it calculates $x_t$ from $x_0$ by using Eq.\ref{eq:close_form_single_gamma}. The neural network $\varepsilon_{\theta}$ has an input $x_t$ and is condition on the time step $t$.
Then it takes a gradient descend step to approximate the normalized noise $\frac{\bar g_t - \bar k_t \theta_t}{\sqrt{1 - | \bar \alpha_t|}}$ with the neural network $\varepsilon_{\theta}$. The main changes between Algorithm~\ref{alg:single_gamma_train} and the single Gaussian case (i.e. Alg.~\ref{alg:DDPM_train}) is the following: (i) calculating the Gamma parameters, (ii) $x_t$ update equation and (iii) the gradient update equation. 

The inference procedure is given in Algorithm~\ref{alg:single_gamma_infernce}. The algorithm starts from a zero mean noise $x_T$ sampled from $\Gamma(\theta_T, \bar k_T)$. Then for $T$ steps the algorithm estimates $x_{t-1}$ from $x_t$ by using Eq.\ref{eq:sample_gamma}. Note that as in \cite{song_denoising_2020} $\sigma_t=\beta_t$. Algorithm~\ref{alg:single_gamma_infernce} change the single Gaussian (i.e. Alg.~\ref{alg:DDPM_inf}) with the following: (i) the start sampling point $x_T$, (ii) the sampling noise $z$ and (iii) the $x_t$ update equation.

\begin{figure}[!t]
\begin{minipage}[t]{0.49\textwidth}
\begin{algorithm}[H]
  \caption{Gamma Training Algorithm} 
  \label{alg:single_gamma_train}
  \begin{algorithmic}[1]
  \STATE Input: initial scale $\theta_0$, dataset $d$, diffusion process length $T$, noise schedule $\beta_1,...,\beta_T$
   \REPEAT
   \STATE $x_0 \sim d(x_0)$
   \STATE $t \sim \mathcal{U}(\{1, ..., T\})$
   \STATE $\bar g_t \sim \Gamma(\bar k_t, \theta_t)$
   \STATE $x_t = \sqrt{\bar \alpha_t}x_0 + (\bar g_t -\bar k_t\theta_t)$
   \STATE Take gradient descent step on: \newline $\| \frac{\bar g_t - \bar k_t \theta_t}{\sqrt{1 - | \bar \alpha_t|}} - \varepsilon_{\theta}(x_t, t)\|_1 $
   \UNTIL{converged}
  \end{algorithmic}
\end{algorithm}
\end{minipage}
\hfill
\begin{minipage}[t]{0.49\textwidth}
\begin{algorithm}[H]
  \caption{Gamma Inference Algorithm}%
  \label{alg:single_gamma_infernce}
  \begin{algorithmic}[1]
   \STATE $ \gamma \sim \Gamma(\theta_T, \bar k_T) $
   \STATE $x_T = \gamma - \theta_T*\bar k_T$
   \FOR{t = T, ..., 1}
      \STATE $x_{t-1} = \frac{x_t -\frac{1-\alpha_t}{\sqrt{1 - \bar\alpha_t}} \epsilon(x_t, t)}{\sqrt{\alpha_t}}$
    \IF{t > 1}
        \STATE $z \sim \Gamma(\theta_{t-1}, \bar k_{t-1})$
        \STATE $z = \frac{z - \theta_{t-1}\bar k_{t-1}}{\sqrt{(1- \bar \alpha_t)}}$
        \STATE $x_{t-1} = x_{t-1} + \sigma_t z$
    \ENDIF
    \ENDFOR
  \end{algorithmic}
\end{algorithm}
\end{minipage}
\end{figure}

\section{Experiments} 

\subsection{Speech Generation} 
For our speech experiments we used a version of Wavegrad \cite{chen_wavegrad_2020} based on this implementation \cite{ivangit} (under BSD-3-Clause License). We evaluate our model with high level perceptual quality of speech measurements which are PESQ \cite{PESQ_paper}, STOI \cite{STOI_paper} and MCD \cite{mcd_paper}. We used the standard Wavegrad method with the Gaussian diffusion process as a baseline. We use two Nvidia Volta V100 GPUs to train our models.

For all the experiments, the inference noise schedules ($\beta_0, .., \beta_T$) were defined as described in the Wavegrad paper \cite{chen_wavegrad_2020}. For $1000$ and $100$ iterations the noise schedule is linear, for $25$ iterations it comes from the Fibonacci and for $6$ iteration we performed a grid search that is model-dependent to find the best noise schedule parameters. For other hyper-parameters (e.g. learning rate, batch size, etc) we use the same as appearing in Wavegrad~\cite{chen_wavegrad_2020}.

For Gaussian mixture noise, we set $C=2$ which is a mixture of two Gaussians. To ensure the symmetry of the distribution with respect to $0$ we used equal probability of sampling from both Gaussian (i.e. $p_1 = p_2 = 0.5$). Wavegrad is trained using a continuous noise level (i.e. $\sqrt{\bar \alpha_t}$) instead of a time step embedding conditioning. Therefore, we set the standard deviation $\phi_t$ of the Gaussians to be linearly dependent on the noise level $\phi_t = \phi_{start} + (\phi_{end} - \phi_{start})\sqrt{\bar \alpha_t}$

where $\phi_{end}$ and $\phi_{start}$ is hyper-parameters. We empirically observe that results are best starting from $\phi_{start} = 1$ and going toward $\phi_{end} = 0.5$. With $\phi_{start} = 1$ from Eq.\ref{eq:mean_relation} the added value at the last step is a single Gaussian centred at $0$. Intuitively, at the end of the inference process we should add small amount of noise which conduct with single Gaussian centred at zero. 

For Gamma noise, the training was performed using the following form of Eq.~\ref{eq:gamma_t}, e.g. $\theta_t = \sqrt{\bar \alpha_t} \theta_0$ and $k_t = \dfrac{\beta_t}{\bar\alpha_t{\theta_0}^2}$. Our best result were obtained using $\theta_0 = 0.001$. 

\noindent{\bf Results\quad }
In Tab.~\ref{tab:LJ} we present the PESQ, STOI, and MCD measurement for LJ dataset \cite{ljspeech17} (under Public Domain license). As can be seen, for a mixture of Gaussian our results are better than the Wavgrad baseline for all number of iterations in both PESQ and STOI. For the Gamma noise distribution our results are better than Wavgrad for $6,25,100$ iteration in both scores. For $1000$ our STOI is better, while the PESQ scores of the model based on the Gamma distribution is inferior but very close to the baseline.

In terms of the MCD score, for a mixture of Gaussian our results are better for all iterations. For Gamma distribution noise even though improving speech distance metrics (e.g. STOI and PESQ) over Wavegrad, our results are worse in MCD measurement. We quantitatively observe that using the Gamma distribution, adds an amount of noise in the generated samples which leads to less sharp spectrograms and hurts the MCD measurement. We believe this is due to the bounded nature of the gamma distribution. Sample results can be found under the following link: \href{https://enk100.github.io/Non-Gaussian-Denoising-Diffusion-Models}{https://enk100.github.io/Non-Gaussian-Denoising-Diffusion-Models/}

\begin{table}[]
\centering
\caption{PESQ, STOI, and MCD metrics for the LJ dataset for various Wavgrad-like models.}
\label{tab:LJ}
\begin{tabular}{@{}l@{~~}c@{~~}c@{~~}c@{~~}c@{~~}c@{~~}c@{~~}c@{~~}c@{~~}c@{~~}c@{~~}c@{~~}c@{}}
\toprule
& \multicolumn{4}{c}{PESQ ($\uparrow$)} & \multicolumn{4}{c}{STOI ($\uparrow$) } & \multicolumn{4}{c}{MCD ($\downarrow$)}\\
\cmidrule(lr){2-5}
\cmidrule(lr){6-9}
\cmidrule(lr){10-13}
Model $\setminus$ Iteration & 6    & 25    & 100   & 1000  & 6    & 25    & 100   & 1000  & 6    & 25    & 100   & 1000\\
\midrule 
Single Gaussian (\cite{chen_wavegrad_2020}) & 2.78 & 3.194 & 3.211 & 3.290 & 0.924 & 0.957 & 0.958 & 0.959 & 2.76 & 2.67 & 2.64 & 2.65 \\
Mixture Gaussian    & 2.84 & \textbf{3.267}  & \textbf{3.321}  & \textbf{3.361} &  0.925  & 0.961   & 0.964   & 0.965  &  \textbf{2.69}  & \textbf{2.46} & \textbf{2.44} & \textbf{2.43}  \\
Gamma Distribution    & \textbf{3.07} & 3.208 & 3.214 & 3.208 & \textbf{0.948} & \textbf{0.972}  & \textbf{0.969} & \textbf{0.969} & 2.89 & 2.85 & 2.86 & 2.84 \\
\bottomrule
\end{tabular}
\end{table}

\subsection{Image Generation} 
Our model is based on the DDIM implementation available in \cite{DDIMgithub} (under the MIT license).  We trained our model on two image datasets (i) CelebA 64x64 \cite{liu2015faceattributes} and (ii) LSUN Church 256x256 \cite{yu15lsun}. The Fréchet Inception Distance (FID)~\cite{heusel2017gans} is used as the benchmark metric. For all experiments, similarly to previous work~\cite{song_denoising_2020}, we compute the FID score with $50,000$ generated images using the torch-fidelity implementation~\cite{torchfidelity}. Similarly to \cite{song_denoising_2020}, the training noise schedule $\beta_1, ... ,\beta_T$ is linearly with values raging from $0.0001$ to $0.02$. For other hyper parameters (e.g. learning rate, batch size etc) we use the same parameters that appear in DDPM \cite{ho2020denoising}. We use eight Nvidia Volta V100 GPUs to train our models.

For Image Generation using the Gaussian mixture we set the parameter $\phi_t$ to be linearly dependent on time step number $t$ $\phi_t = \phi_{start} + (\phi_{end} - \phi_{start})\frac{t}{T}$
Empirically we found that results are best with settings similar to those employed in the speech experiment $\phi_{start}=1$ and $\phi_{end}=0.5$. As observed in the speech experiment, the model performs best when the start of the diffusion process has a noise which most likely value is zero. It led us to the use of $\phi_{start}=1$. 
For Gamma distribution we use $\theta_0=0.001$.

\noindent{\bf Results\quad}
{
We test our models with the inference procedure from DDPM \cite{ho2020denoising} and DDIM \cite{song_denoising_2020}. In Tab.~\ref{tab:celeba} we provide the FID score for CelebA (64x64) dataset \cite{liu2015faceattributes} (under non-commercial research purposes license). As can be seen for DDPM inference procedure for $10,20,50$ steps, the best results obtained from the mixture of Gaussian model, which improves the results by a gap of $268$ FID scores for ten iterations. For $100$ iteration, the best model is the Gamma which improves the results by $31$ FID scores. For $1000$ iteration, the best results obtained from the DDPM model, Nevertheless, our Gamma model obtains results that are closer to the DDPM by a gap of $0.83$. For the DDIM procedure, the best results obtained with the Gamma model for all number of iteration. Furthermore, the mixture of Gaussian model improves the results of the DDIM for $10,20,50,100$ and obtain comparable results for $1000$ iterations. {Fig.~\ref{fig:evolution_celebA} presents samples generated by the three models. Our models provide better quality images when comparing to single Gaussian method.}

In Tab.~\ref{tab:church} we provide the FID score for the LSUN church dataset \cite{yu15lsun} (under unknown license). As can be seen, the Gamma model improves the results over the baseline for $10,20,50,100$ iterations. 

We did not obtain the results for the mixture of Gaussian model due to costly training of the diffusion process to such high resolution dataset.}

\begin{table}[]
\centering
\caption{FID score comparison for CelebA(64x64) dataset. Lower is better.}
\label{tab:celeba}
\begin{tabular}{lccccc}
\toprule
Model $\setminus$ Iteration & 10     & 20     & 50    & 100   & 1000      \\
\midrule 
Single Gaussian DDPM \cite{ho2020denoising}   & 299.71 & 183.83 & 71.71 & 45.2  & \textbf{3.26}    \\
Mixture Gaussian DDPM (ours)   & \textbf{31.21}  & \textbf{25.51}  & \textbf{18.87} & 14.69 & 5.57  \\
Gamma Distribution DDPM (ours)   & 35.59  & 28.24  & 20.24 & \textbf{14.22} & 4.09   \\
\midrule 
Single Gaussian DDIM  \cite{song_denoising_2020}   & 17.33  & 13.73  & 9.17  & 6.53  & 3.51      \\
Mixture Gaussian DDIM  (ours)   & 12.01  & 9.27   & 7.32  & 6.13  & 3.71     \\
Gamma Distribution DDIM (ours)    & \textbf{11.64}  & \textbf{6.83}   & \textbf{4.28}   & \textbf{3.17}  & \textbf{2.92}      \\
\bottomrule
\end{tabular}
\end{table}

\begin{table}[]
\centering
\caption{FID score comparison for LSUN Church (256x256) dataset. Lower is better.}
\label{tab:church}
\begin{tabular}{lcccc}
\toprule
Model $\setminus$ Iteration & 10     & 20     & 50    & 100   \\ 
\midrule 
Single Gaussian DDPM \cite{ho2020denoising}     &  51.56 &	23.37 &	11.16 &	8.27 \\ 
Gamma Distribution DDPM (ours)      &  \textbf{28.56} &	\textbf{19.68} &	\textbf{10.53} &	\textbf{7.87} \\ 
\midrule 
Single Gaussian DDIM  \cite{song_denoising_2020}     &  19.45 &	12.47 &	10.84	& 10.58 \\ 
Gamma Distribution DDIM (ours)    &  \textbf{18.11} &	\textbf{11.32} &	\textbf{10.31} &	\textbf{8.75}  \\ 
\bottomrule
\end{tabular}
\end{table}

\begin{figure}
\centering
\begin{tabular}{c}
\includegraphics[width=\linewidth]{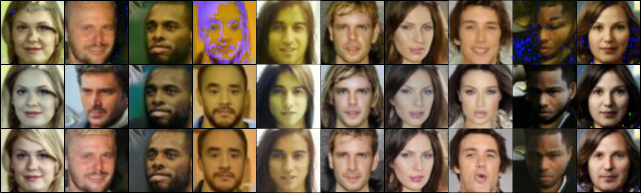}\\ 
\end{tabular}
\smallskip
\caption{Typical examples of images generated with $100$ iterations and $\eta=0$. For three different models trained with different noise distributions - (i) First row - single Gaussian noise  (ii) Second row - Mixture of Gaussian noise, (iii) Third row - Gamma noise. All models start from the same noise instance.}
\label{fig:evolution_celebA}
\end{figure}

\section{Limitations}
While we were able to add two new distributions to the toolbox of diffusion methods, we were not able to provide a complete characterization of all suitable distributions. This effort is left as future work. Additionally, our work suggests that the Gaussian noise should be replaced. However, we still need to identify the conditions in which each of the distributions would outperform the others. 

\section{Conclusions} 
{
We present two novel diffusion models. The first employs a mixture of two Gaussians and the second the  Gamma noise distribution.  A key enabler for using these distributions is a closed form formulation (Eq.~\ref{eq:closed_form_gauss_mix} and Eq.~\ref{eq:close_form_single_gamma}) of the multi-step noising process, which allows for efficient training. These two models improve the quality of the generated image and audio as well as the speed of generation in comparison to conventional Gaussian-based diffusion processes.} 

\section{Acknowledgments}
 This project has received funding from the European Research Council (ERC) under the European Unions Horizon 2020 research and innovation programme (grant ERC CoG 725974). The contribution of Eliya Nachmani is part of a Ph.D. thesis research conducted at Tel Aviv University.

\bibliographystyle{plain}
\bibliography{neurips_2021}

\end{document}